\newtheorem{theorem}{Theorem}[section]
\newtheorem{lemma}[theorem]{Lemma}
\newif\iftaclinstructions
\newcommand{\instr}
\useunder{\uline}{\ul}{}
\pgfplotsset{compat=1.14, every non boxed x axis/.append style={x axis line style=-},
     every non boxed y axis/.append style={y axis line style=-}}
\title{Pruning the Index Contents for Memory Efficient Open-Domain QA}
\author{Martin Fajcik, Martin Docekal, Karel Ondrej, Pavel Smrz\\
  Brno University of Technology\\
  {\tt \{ifajcik,idocekal,ondrej,smrz\}@fit.vutbr.cz} }
\date{}
\begin{document}
\maketitle

\begin{abstract}
    This work presents a novel pipeline that demonstrates what is achievable with a combined effort of state-of-the-art approaches. Specifically, it proposes the novel R2-D2 (\textsc{Rank twice}, \textsc{reaD twice}) pipeline composed of retriever, passage reranker, extractive reader, generative reader and a simple way to combine them.
    Furthermore, previous work often comes with a massive index of external documents that scales in the order of tens of GiB. 
    This work presents a simple approach for pruning the contents of a massive index such that the open-domain QA system altogether with index, OS, and library components fits into 6GiB docker image while retaining only 8\% of original index contents and losing only up to 3\% EM accuracy\footnote{Our demo is available at \url{http://r2d2.fit.vutbr.cz/}. Code and preprocessed data are available at \url{https://github.com/KNOT-FIT-BUT/R2-D2}.}.
\end{abstract}


    
\section{Introduction}
Recent advances in neural passage retrieval \cite[\textit{inter alia}]{karpukhin2020dense, izacard2020distilling, khattab2020relevance, luan2020sparse} greatly improved the performance of open-domain question answering systems (open-QA). 
The goal of these systems is to provide an answer to factoid questions. 
Traditional open-QA systems \cite{chen2017reading} seek evidence for answering these questions inside the knowledge source. 
This is often a large corpus of short snippets of natural language, so-called passages, with information-rich contents (e.g., taken from an encyclopedia). 
The current state-of-the-art systems can be scaled to millions or even billions \cite{seo2019real} of natural language passages. With the ongoing progress, and ever-growing sources of information, it can be expected that the open-QA will play a major role in everyday human life, e.g., in complementing or even replacing document search, as we know it \cite{etzioni2011search}. Therefore a natural question arises: \textit{Is all of this information relevant for current open-QA systems}? 

To gain evidence towards answering this question we experiment with our simple content-based pruning approach --- a binary classifier which selects whether the passage is irrelevant or not without seeing any question ---  on popular open-QA datasets NaturalQuestions, TriviaQA and EfficientQA. Surprisingly, we find that most (about 92\%) of the information content can be pruned away with only minor (3 EM) performance degradation to be seen in the current open-domain pipelined QA systems.

As our second contribution, we present a novel pipelined open-QA baseline composed of retriever, passage reranker, extractive reader, generative reader, and a simple component fusion approach.
Our system sets a new state-of-the-art on NaturalQuestions dataset. Furthermore it ended up among the top performing systems in the EfficientQA competition \cite{min2021neurips}\footnote{Leaderboard available at \url{https://efficientqa.github.io/}.}.

\section{Pruning Approach}
To reduce the size of the index, we resort to an apriori relevance classifier, which selects the relevant content without seeing a question. Note this is in contrast with the retriever, which considers a question when assigning the relevance. Consider the Wikipedia corpus split into 100-word passages. 
The recent work of \citet{karpukhin2020dense} indicates that the distribution of golden passages --- the passages containing an answer from the dataset --- differs from the distribution of all passages. 
This is implicated by the fact that golden passages perform as better negative samples than just any randomly sampled passages when training the retriever.
Therefore, given a passage $p_i$ from Wikipedia, we propose an apriori relevance classifier (we call \emph{pruner}) into relevance class $r$ that models the Bernoulli distribution $\boldsymbol{P}(r|p_i)$. 
The input of this classifier is the concatenation of Wikipedia passage (sometimes referred to as context) and its article's title separated with the special \texttt{SEP} token.
The classifier is trained via binary cross-entropy on the set of golden passages and non-golden passages extracted from Wikipedia.
In test-time, we collect the probabilities $\boldsymbol{P}(r|p_i)$ for each passage $p_i$ in the corpus. 
We keep only passages $p_i$ that satisfy the threshold constraint $\boldsymbol{P}(r|p_i)> \tau; \tau \in (0,1)$. 
Furthermore, we empirically verify in Section \ref{sec:results_and_analysis_embconnection} that the passage embeddings from \citet{karpukhin2020dense} contain strong features that capture the very same apriori relevance as this classifier does.

\section{Open-QA Pipeline}

To estimate the impact of corpus pruning on various open-QA components, we propose a pipelined system R2-D2 (\textsc{Rank twice}, \textsc{reaD twice}).  The parameters of each component are estimated separately. It is composed of DPR passage retriever \cite{karpukhin2020dense}, passage reranker (see subsection \ref{ss:reranker}), and two readers. Figure \ref{fig:r2d2_pipeline} shows the diagram of our system. The first reader performs an extractive span-selection similar to \citet{fajcik2020rethinking}. The second reader is based on Fusion-In-Decoder (FiD) \cite{izacard2020leveraging}.

Formally, given a question $q \in \mathcal{Q}$ from the set of all possible questions $\mathcal{Q}$  and the corpus $\mathcal{C}=\{p_1, p_2, ... , p_n\}$ composed of passages $p_i$, the retriever learns a ranking function $\operatorname{rank}:\mathcal{Q} \times \mathcal{C} \rightarrow \mathbb{R} $ that assigns a score to each passage. Note each passage contains its passage title.

Taking a top-$K$ scoring passages $\mathcal{C}_{r}$, reranker again re-scores $\mathcal{C}_{r}$ scoring passages by learning a reranking function $\operatorname{rerank}:\mathcal{Q} \times \mathcal{C}_{r} \rightarrow \mathbb{R}$. Note that while $\operatorname{rank}$ and $\operatorname{rerank}$ have similar signatures, the computational cost of $\operatorname{rerank}$ over the same amount of passages is drastically higher, as it computes fine-grained interaction between tokens of question and passage.

Next, we experiment with two readers: the extractive reader reads top-$V$ passages $\mathcal{C}_{rr}$ independently and assigns probability $\boldsymbol{P}_{e}(a_e|q, \mathcal{C}_{rr})$ to each span $a_e$ in the passages (see subsection \ref{ss:ext_reader}). 
The FiD generative reader reads top-$V_2$ passages $\mathcal{C}_{rr}'$ and generates an answer from probability space $\boldsymbol{P}_g(a_g|q,\mathcal{C}_{rr}')$ via greedy search.

Finally, R2-D2 aggregates the outputs from all components using two fusions (described in subsection \ref{ss:fusions}).

\begin{figure}[t!]
    \centering
    \def\col{2.00cm}
\def\row{3.30cm}
\def\blockwidth{3.1cm}
\def\blockwidthlarge{7.1cm}
\def\blockheight{0.90cm}
\def\arrowrate{0.3}

\def\outputcolor{gray!20}
\def\examplecolor{white}

\newcommand\Textbox[2]{%
    \parbox[c][\dimexpr#1-7.7pt][c]{2.3cm}{\centering{\bf #2}}}

\begin{tikzpicture}[->,line width=2pt,>=latex,node distance=\lvspace,
                    font=\footnotesize,
                    thick,
                    module/.style={
                        line width=1pt,
                        rectangle split, 
                        rectangle split parts=3,
                        rectangle split part fill={blue!30, gray!20, gray!10},
                        draw,
                        text width=\blockwidth, 
                        align=center, 
                        rounded corners=0.1cm,
                    }
                ]
    \node[align=center,text width=0.5\textwidth] at (1*\col,3.7*\row) (question) {\large In which Czech city is the brewery of its largest beer exporter?};

    \node[module,rectangle split part fill={blue!30, \outputcolor, \examplecolor}, text width=\blockwidthlarge]   at (1*\col,3*\row) (R)  {\Textbox{\blockheight}{Retriever}\nodepart{two}{top-K passages}\nodepart[align=left]{three}{1. ... town of České Budějovice, known as Budweis... \\ 2. Czech Beer Festival is the biggest ... \\ 3. Plzeň, also called Pilsen is a city...}};
    \node[cylinder, draw, ,thick,aspect=0.2,
        minimum height=0.8cm,minimum width=1.0cm,
        shape border rotate=90, align=center] at (-0.5*\col,3.2735*\row) (datastorage) {index};

    \node[module,rectangle split part fill={cyan!40, \outputcolor, \examplecolor}, text width=\blockwidthlarge]   at (1*\col,2*\row) (RR) {\Textbox{\blockheight}{Passage reranker}           \nodepart{two}{top-K reranked passages}\nodepart[align=left]{three}{1. Plzeň, also called Pilsen is a city... \\ 2. ... town of České Budějovice, known as Budweis... \\ 3. Czech Beer Festival is the biggest ...}};
    \node[module,rectangle split part fill={green!30, \outputcolor, \examplecolor}]  at (0*\col,1*\row) (ER) {\Textbox{\blockheight}{Extractive reader}          \nodepart{two}{top-M answer spans}\nodepart[align=left]{three}{1. České Budějovice  \\ 2. Festival \\ 3. Plzeň}};
    \node[module,rectangle split part fill={yellow!40, \outputcolor, \examplecolor}] at (2*\col,1*\row+2.1ex) (AR) {\Textbox{\blockheight}{Abstractive reader}         \nodepart{two}{top generated answer}\nodepart[align=left]{three}{1. Brno}};
    \node[module,rectangle split part fill={yellow!40, \outputcolor, \examplecolor}]    at (0*\col,0*\row) (ARR)  {\Textbox{\blockheight}{Abstractive reader}\nodepart{two}{top-M reranked spans}\nodepart[align=left]{three}{1. Plzeň \\ 2. Festival \\ 3. České Budějovice}};
    \node[module,rectangle split part fill={red!30, \outputcolor, \examplecolor},]    at (0*\col,-1*\row) (AGGR)  {\Textbox{\blockheight}{score aggregation}\nodepart{two}{top-M aggr. spans}\nodepart[align=left]{three}{1. Plzeň \\ 2. České Budějovice \\ 3. Festival}};
    \node[module,rectangle split part fill={red!40, \outputcolor, \examplecolor}]    at (1*\col,-2*\row+2.1ex) (BD)  {\Textbox{\blockheight}{binary decision}\nodepart{two}{top answer}\nodepart[align=left]{three}{1. Plzeň}};
    
    \draw[->,rounded corners] (question.south) to (R.north-|question.south);
    \draw[->,rounded corners] (R) to (RR);
    \draw[->,rounded corners] (RR) to ($(RR.south)!\arrowrate!(ER.north-|RR.south)$) to ($(ER.north|-RR.south)!\arrowrate!(ER.north)$) to (ER.north);
    \draw[->,rounded corners] (RR) to ($(RR.south)!\arrowrate!(AR.north-|RR.south)$) to ($(AR.north|-RR.south)!\arrowrate!(AR.north)$) to (AR.north);
    \draw[->,rounded corners] (ER) to (ARR);
    \draw[->,rounded corners] (AR) to ($(AGGR.south-|AR.south)!\arrowrate!(BD.north-|AR.south)$) to ($(BD.north|-AGGR.south)!\arrowrate!(BD.north)+(0.5cm,0)$) to ($(BD.north)+(0.5cm,0)$);
    \draw[->,rounded corners] (RR) to ($(RR.south|-ARR.east)+(0,0.9cm)$) to ($(ARR.east)+(0,0.9cm)$);
    \draw[->,rounded corners] (RR) to ($(RR.south|-AGGR.east)+(0,0.9cm)$) to ($(AGGR.east)+(0,0.9cm)$);
    \draw[->,rounded corners] (ARR) to (AGGR);

    \draw[->,rounded corners] (AGGR) to ($(AGGR.south)!\arrowrate!(BD.north-|AGGR.south)$) to ($(BD.north|-AGGR.south)!\arrowrate!(BD.north)-(0.5cm,0)$) to ($(BD.north)-(0.5cm,0)$);
    
    \draw[->,rounded corners] (R) to ($(R.south)!\arrowrate!(RR.north-|R.south)$) to ($(R.south)!\arrowrate!(RR.north-|R.south)-(4.22cm, 0)$) to ($(AGGR)-(2.22cm,-0.7cm)$) to ($(AGGR.west)+(0,0.7cm)$);
    
    \draw[->,rounded corners] (ER) to ($(ER.south)!\arrowrate!(ARR.north-|ER.south)$) to ($(ER.south)!\arrowrate!(ARR.north-|ER.south)-(2.00cm, 0)$) to ($(AGGR)-(2.00cm,-1.1cm)$) to ($(AGGR.west)+(0, 1.1cm)$);
\end{tikzpicture}
    \caption{R2-D2 pipeline.}
    \label{fig:r2d2_pipeline}
\end{figure}

\subsection{Passage Reranker}\label{ss:reranker}
The proposed passage reranker is based on transformer cross-encoder similar to \citet{nogueira2019passage, luan2020sparse}.
The input is the concatenation of question $q\in\mathcal{Q}$ and passage $p\in\mathcal{C}_r$ with a special \texttt{SEP} token between them. The passage consists of a title and context that are prepended with special start tokens and concatenated together.
We denote the contextual representation of input token $w$ obtained by the cross-encoder as $\operatorname{En}(p, q)[w]\in\mathbb{R}^d$.

    Now we can define the reranking function to the re-score passage as
\begin{equation}
    \operatorname{rerank}(q,p) = \operatorname{En}(p,q)[\texttt{CLS}]^\top w
\end{equation}
where $w\in\mathbb{R}^{d}$ is a trainable vector and \texttt{CLS} is the special token added at the start of an input sequence.
Finally, we define the following formula\footnote{Formal definition of softmax over a set is described in the Apendix \ref{app:softmax_not}.}
\begin{equation}
    \boldsymbol{P}_{rr}\left(p | q, C_r\right) =\operatorname*{softmax}\limits_{p\in \mathcal{C}_r}\left(\operatorname{rerank}\left(q, p\right)\right)_{p}
\end{equation}
to assign a probability to the case that passage $p$ contains answer to the question $q$.

\begin{description}[style=unboxed,leftmargin=0em,listparindent=\parindent]
    \setlength\parskip{0em}
\item[Training.] The model input for each question is exactly one positive sample supplemented with hard negatives from retriever. The ground truth passage, annotated the same way as in \citet{karpukhin2020dense}, is primarily used as a positive sample. If the ground truth is unknown, the positive sample is the best retriever passage containing the answer.
The hard negatives are uniformly sampled from retriever top-k results that do not contain the answer. 
The used loss function is the cross-entropy.
\end{description}

\subsection{Extractive Reader}
\label{ss:ext_reader}

Extractive reader estimates the probability $\boldsymbol{P}_{e}(a_e|q, \mathcal{C}_{rr})$. 
It is the probability of a span $a_e$ from top-$V$ passage $p \in \mathcal{C}_{rr}$ being an answer to a question $q$. 
We decompose the $\boldsymbol{P}_{e}(a_e|q, \mathcal{C}_{rr})$ into four probabilities of:

\begin{itemize}
    \setlength\parskip{0em}
    \item token $s$ being starting token of an answer span,
    \item token $e$ being ending token of an answer span,
    \item tokens $s$ and $e$ being boundary tokens of an answer span \cite{fajcik2020rethinking},
    \item passage $p$ containing an answer for the question $q$ (inner reranker) as in \citet{karpukhin2020dense}.
\end{itemize}


To obtain the final probability used in test-time, we multiply them all together\footnote{We tried decoding from the subsets of these probabilities in Appendix \ref{app:decoding_ext_probs} not observing significant difference.}. These probabilities are defined as:
\begin{equation}
    \boldsymbol{P}_{*}(*|q, \mathcal{C}_{rr}) = \operatorname{softmax}(s_*)_i \: ,
\end{equation}
where $*$ may stand for a \emph{start}, \emph{end}, \emph{joint}, and a \emph{passage}. The $i$ is an index of a given element, and the $s_*$ is a vector of scores for each element among all passages in $\mathcal{C}_{rr}$. So the \emph{softmax} normalization sum goes through all the passages. On the other hand, the $s_*$ scores are estimated by the model with just a single passage on its input \cite{clark-gardner-2018-simple}. The scores are as follows:
\setlength{\jot}{1ex}
\begin{gather}%
s^{i}_{start} = \operatorname{En}(p,q)[s]^\top w_{start} \\
s^{i}_{end} = \operatorname{En}(p,q)[e]^\top w_{end} \\
s^{i}_{joint} = (W_j \operatorname{En}(p,q)[s] + b_j)^\top \operatorname{En}(p,q)[e] \\
s^{i}_{passage} = \operatorname{En}(p,q)[\texttt{CLS}]^\top w_{p} \:.
\end{gather}%
Where $w_*,b_j \in \mathbb{R}^h$, $\operatorname{En}(p, q)[\cdot] \in \mathbb{R}^h$, and $W_j \in \mathbb{R}^{h \times h}$ are all trainable.

We omit the spans of a title and question for answer span selection. Therefore the final answer can be selected only from the context.

The following training objective with independently marginalized components is used:
\begin{equation} \label{eq:extReaderIndLoss}
\begin{split}
	-\log \sum_{s \in starts(C_{rr})} \boldsymbol{P}_{start}(s|q, \mathcal{C}_{rr}) \\
	-\log \sum_{e \in ends(C_{rr})} \boldsymbol{P}_{end}(e|q, \mathcal{C}_{rr}) \\
	-\log \sum_{j \in boundaries(C_{rr})} \boldsymbol{P}_{joint}(j|q, \mathcal{C}_{rr}) \\
	-\log \sum_{p \in C_{rr}} \boldsymbol{P}_{passage}(p|q, \mathcal{C}_{rr}) \: .
\end{split}
\end{equation}

The sums are going through target annotations (starts, ends, etc.) obtained by the distant supervision approach.
\subsection{Component Fusion}
\label{ss:fusions}
To produce the final answer, R2-D2 aggregates the log-probabilities of all system components via linear combinations tuned on validation data.  

Firstly, the log-probabilities of all system components for top-$M$ answer spans proposed by the extractive reader are aggregated. Formally, assume the $\mathcal{A}_q$ is the set of top-M answer spans from $\boldsymbol{P}_{e}(a|q,\mathcal{C}_{rr})$ for question $q$.
The generative model performs  the \textbf{answer reranking} evaluating the log-probability of the answer spans 
\begin{equation}
    \label{eq:genrerank}
    \{\log\boldsymbol{P}_g(a|q,\mathcal{C}_{rr}'): a\in \mathcal{A}_q\}.
\end{equation}

Next a logistic regression loss \eqref{eq:aggloss} is minimized to perform \textbf{score aggregation}. It combines the scores across the R2-D2 components to maximize the correct answer span probability over dataset~$\mathcal{D}$. This dataset is composed of the top-$M$ outputs of the extractive reader with the correct answer.
\begin{gather}%
x(a) =[\boldsymbol{P}_{e}(a) \; \boldsymbol{P}_g(a) \; \boldsymbol{P}_r(p_a) \; \boldsymbol{P}_{rr}(p_a)] \\
\label{eq:aggloss}
   -\sum_{(\mathcal{A}_q,gt) \in \mathcal{D}} \operatorname*{softmax}\limits_{a \in \mathcal{A}_q} \big({ w^\top \log x(a) + b}\big)_{gt}
\end{gather}%
Here $p_a$ denotes the passage containing the answer span $a$, $\mathcal{A}_q$ is a set of proposed answer spans, $gt$ is the correct answer span, distribution dependencies are dropped for clarity and only the logistic regression parameters $w, b$ are tuned in this step.

Finally, we theorized the correct answer span might not always be available in the passage set $\mathcal{C}_{rr}$, but the generative reader might be able to generate the answer from its parameters and the evidence given in passages. We introduce the binary classifier, which decides whether to select the best span answer from answer aggregation step or a free-form answer generated via FiD. Given that $s_{agg}(q)=\max_{a\in\mathcal{A}_q} w^\top x(a)+b$ is the best span score and $s^*_g(q)=\log\boldsymbol{P}_g(a_q^*|q,\mathcal{C}_{rr}')$ is the log-probability of the answer $a_q^*$  obtained via greedy decoding for question $q$, a classifier is trained via binary cross-entropy $BCE(l,t)$ with log-odds ratio $l$ and target $t$ to do the \textbf{binary decision}
\begin{equation}
\label{eq:bdformula}
       \sum_{(e,t) \in \mathcal{D}} BCE(w^\top [s_{agg}(e);s^*_g(e)]+b, t ).
\end{equation}
Here, the training dataset $\mathcal{D}$ contains only cases where either the extractive or the abstractive prediction is correct (but not both).
\section{Experimental Setup}
We implement models in PyTorch \cite{paszke2019pytorch} using Transformers \cite{wolf-etal-2020-transformers}. We use 12GB GPU to train the passage reranker, 48GB GPU for the generative reader, and 16x 32GB GPUs to train the extractive reader with $V=128$ passages at its input. The inference runs on 12GB GPU. In all experiments, we used Adam optimizer with a decoupled weight decay \cite{loshchilov2017decoupled}. Our models are evaluated by two metrics:
\begin{description}[style=unboxed,leftmargin=0em,listparindent=\parindent]    \setlength\parskip{0em}
    \item [Exact match (EM)] measures the proportion of examples, for which the system prediction matched at least one annotated ground-truth answer. We use the script from \citet{lee-etal-2019-latent}\footnote{\url{https://cutt.ly/rkZNIer}}.
    \item [Accuracy@K] measures the proportion of examples, for which the ground-truth answer string is present in top-K retrieved passages. We match the string exactly as \citet{karpukhin2020dense}\footnote{\url{https://cutt.ly/0luNhx4}}.
\end{description}

\subsection{Datasets and Data Pre-processing}

We evaluate our models on three datasets. Their statistics are available in Table \ref{fig:datatsets}. To train the reranker we filter out examples, which do not contain golden passage or exact match in top-$K$ retrieved passages. 
To train the extractive reader, only examples with exact match in golden passage or top-1 retrieved passage are kept. Both filtering strategies are closely described in Appendix \ref{app:data_preprocessing}.

\begin{description}[style=unboxed,leftmargin=0em,listparindent=\parindent]
\setlength\parskip{0em}
\item[NQ-Open] \cite{kwiatkowski2019natural, lee-etal-2019-latent} or NaturalQuestions-Open, consists of real user queries obtained from Google search engine. The maximum length of each answer is at most 5 tokens. Each training and development sample contains 1 annotated answer, while test data contain 5-way answer annotation.

\item[TQ-Open] \cite{joshi-etal-2017-triviaqa} or TriviaQA-Open consists of question-answer pairs from 14 different trivia quiz websites. Each question contains human annotated answer and a set of answer aliases gathered from Wikipedia. We use the unfiltered version.

\item[EfficientQA] \cite{min2021neurips} is a dataset collected the same way as NQ-Open through 2019, and thus may contain more questions without evidence in our corpus than NQ-Open. Furthermore, it doesn't suffer from dev/test discrepancy, as it was collected for open-domain QA directly (see Appendix B in \citet{min2020ambigqa}). We use the officially released dev set for testing.

\end{description}

\begin{table}
    \centering
    \scalebox{0.92}{\def\tablepad{1ex}
\begin{tabular}{l r r r}
\toprule
    \multicolumn{1}{c}{\bf Dataset} & \multicolumn{1}{c}{\bf Train} & \multicolumn{1}{c}{\bf Dev} &     \multicolumn{1}{c}{\bf Test}  \\
    \midrule
    NQ-Open    & 79,168 & 8,757 & 3,610 \\
    \hspace{\tablepad} 
     - filt. reranker    & 71,238 & - & - \\
    \hspace{\tablepad}  
     - filt. ext. reader   & 61,755 & - & - \\
    \hspace{\tablepad} 
     - w/ golden passage  & 58,876 & 6,515  & - \\
    TQ-Open    & 78,785 & 8,837 & 11,313 \\
    \hspace{\tablepad} 
     - filt. reranker    & 69,346 & - & - \\
    \hspace{\tablepad} 
     - filt. ext. reader   & 62,332 & - & - \\
    \hspace{\tablepad} 
     - w/ golden passage    & 60,413 & 6,760 & - \\
    EfficientQA & -      & -       & 1,800 \\
    \midrule
    NQ-Golden       & \multicolumn{1}{c}{176,628} & 4,332  & 8,698 \\
    TQ-Golden       & \multicolumn{1}{c}{181,239} & 4,516  & 9,004 \\
\bottomrule
\end{tabular}}
    \caption{Dataset statistics. The filt. lines report how many examples are kept for training the reranker (filt. reranker) and extractive reader (filt. ext. reader). The lines w/ golden passage denote how many examples from the set contain golden passage annotation. The \textit{Golden} sets are a datasets used to estimate the pruner. }
    \label{fig:datatsets}
\end{table}

\subsection{Models and Pipeline}

\begin{description}[style=unboxed,leftmargin=0em,listparindent=\parindent]
\setlength\parskip{0em}

\item[Pruner and Pruning.]
 We fine-tune the base version of ELECTRA \cite{Clark2020ELECTRA:} with a 2-layer feed-forward network on top of it (the same way as authors do it in classification tasks) as binary classifier. 
To train the pruner, we create training set with 2 negative passages per positive passage from dataset's training examples with golden passage annotation.  The negative passages are uniformly sampled from all non-golden Wikipedia's passages.
To create development and test sets for pruner, we split the subset of dataset's development set, with examples containing golden passage annotation, using a $1:2$ ratio. We sample only one negative passage per positive sample for development and test sets so that datasets are balanced. We further refer to these datasets as \emph{Golden}. The procedure is same for both datasets. The system is trained via cross-entropy in 2 epochs using batch size $12$ and learning rate $3 \cdot 10^{-5}$ linearly decreasing to~0.
The $\tau$ threshold is tuned so that we pool top 1.7M passages to fit the 6GiB limit.
We combine these relevant passages with missing golden passages from the training data, obtaining 1,702,133 passages in total for NQ-Open and EfficientQA and 1,706,676 passages for TQ-Open.

\item[Retriever.]
We use BERT-based DPR from the official checkpoint\footnote{\url{https://github.com/facebookresearch/DPR}}. Each passage is represented via 768-dimensional embedding. We use multiset checkpoint for TQ-Open, as the checkpoint for TQ directly isn't officialy released.
We use the same knowledge corpus containing 21,015,320 passages based on 12-20-2018 Wikipedia snapshot as \citet{karpukhin2020dense}. In inference time, the retriever passes $K=200$ passages $\mathcal{C}_r$ to reranker.

\item[Passage reranker.]
We use the RoBERTa-base \cite{liu2019roberta} and truncate the inputs to maximum length 256. The linear scheduler with 0.1 warmup proportion is used, the number of epochs is 5 and the model is validated every 40,000 optimization steps. 
The initial learning rate is $1.6\cdot10^{-4}$, batch size equals to 8 and model reranks 24 passages per question from top-400 DPR retrieved passages.
During the inference, top-$K$ retriever passages are rescored and passed to readers.

\item[Extractive reader.] The extractive reader encoder is based on pre-trained ELECTRA-large. 
Its inputs are truncated if they are longer than the allowed maximum size (512 tokens). 
During the training phase, all spans from all $p \in \mathcal{C}_{r}$\footnote{Note that we use the retriever output directly.} that match\footnote{Matching strategies are described in Appendix \ref{app:data_preprocessing}.} with at least one of the known answers are selected as target annotations. 
Therefore the annotations might appear in the wrong context.

The extractive reader reads top 128 passages during the training phase and when it is used without the reranker. 
To demonstrate the effect of reranker, the reader reads only top 24 passages if the reranker is used.

We used a linear scheduler with a warmup for the first 20,000 steps for all models. 
The maximum number of training steps was 200,000. 
The model was validated every 20,000 steps, and the best checkpoint among validations was selected. 
The initial learning rate was $2 \cdot 10^{-5}$ and the optimization step was done after each training example.

\item[Generative reader.]
We utilize T5-large \cite{raffel2020exploring} and use a concatenation of question, passages and their respective titles at the Fusion-in-Decoder's input the same way as \citet{izacard2020distilling}.  
We truncate each passage to length 250 tokens for NQ. 
For TQ, as questions are significantly longer, we truncate whole inputs to the same size.
Following FiD for TQ, we use only human-generated answer. 
In training, the golden passage always comes first, if available, and we take the rest of passages as ranked from previous step up to $V_2$ passages.
Due to the large memory requirements of the original approach, we use only $V_2 = 25$ passages. 
We use the similar hyperparameters as the original work --- batch size 64, learning rate $5 \cdot 10^{-5}$ but no learning rate schedule. 
In test time, we decode an answer via greedy decoding.
\end{description}

\subsection{Compressing the image size}
We save models and index in half-precision without significant loss of performance. Furthermore, we use off-the-shelf \texttt{ZIP}\footnote{\url{https://launchpad.net/ubuntu/+source/zip}} compression to reduce the size of the models and the corpus.
To fit the 6GiB limit, we use 100MB CentOS8 docker image\footnote{\texttt{nvidia/cuda:10.2-base-centos8}} and we also compress python's \texttt{site-packages} to reduce the size of PyTorch.

\section{Results and Analysis}

\begin{table}[t]
    \scalebox{0.69}{\begin{tabular}{llccc}
\cmidrule[\heavyrulewidth]{2-5}
\multicolumn{1}{c}{} & \multicolumn{1}{c}{\textbf{Method}} & \multicolumn{1}{c}{\textbf{NQ}}& \multicolumn{1}{c}{\textbf{TQ}} & \multicolumn{1}{c}{\textbf{\#$\boldsymbol{\theta}$}} \\ \cmidrule{2-5} 
\multirow{13}{*}{\rotatebox[origin=c]{90}{Extractive}}   & BM25+BERT \cite{mao2020generation}&    37.7    &    60.1                     &  110M                 \\
                     & Hard EM \cite{min2019discrete}      &    28.1   &    50.9                   &  110M                 \\
                     & Path Retriever \cite{asai2019learning}&  32.6   &    -                      &  447M                 \\ 
                     & Graph Retriever \cite{min2019knowledge}& 34.5   &    56.0                   &  110M                 \\
                     & ORQA \cite{lee-etal-2019-latent}           &    33.3  &    45.0             &  220M                 \\
                     & REALM \cite{guu2020realm}           &    40.4     &    -                    &  660M                 \\
                     & ProQA \cite{xiong2020progressively} &    34.3     &    -                    &  220M                 \\
                     & DPR \cite{karpukhin2020dense}       &    41.5     &    56.8                 &  220M                 \\
                     & DPR-subset$^*$ \cite{min2021neurips}    &    34.8   &    -                      &  220M                 \\
                     & RDR \cite{yang2020retriever}        &    42.1    &   57.0                  &  110M                 \\
                     & GAR+DPR \cite{mao2020generation}    &    43.8    &    -                     &  626M                 \\ 
                     & ColBERT (large) \cite{khattab2020relevance}& 48.2  &    63.2$^{-}$                &  440M                 \\ 
                     & RIDER (GAR+DPR) \cite{mao2021reader} &    48.3  &    -                       &  626M                 \\ \cmidrule{2-5}  

\multirow{8}{*}{\rotatebox[origin=c]{90}{Generative}}      & BM25+SSG \cite{mao2020generation}&    35.3     &    58.6                    &  406M \\
                     & T51.1+SSM \cite{roberts2020much}    &    35.2   &    61.6                      &  11B                  \\
                     & RAG \cite{lewis2020retrieval}       &    44.5   &    56.8                      &  516M                 \\ 
                     & DPR+SSG \cite{min2020ambigqa}       &    42.2   &    -                      &  516M                 \\ 
                     & FiD-base \cite{izacard2020leveraging}&   48.2   &    65.0                      &  333M                 \\ 
                     & FiD-large \cite{izacard2020leveraging}&  51.4   &    67.6                      &  848M                 \\ 
                     & FiD-large++$^*$ \cite{izacard2020memory}&    53.6   &    71.3                   &  848M                 \\
                     & FiD-large++ \cite{izacard2020memory}&    54.7   &    \textbf{73.3}                     &  848M                 \\\cmidrule{2-5} 
\multirow{4}{*}{\rotatebox[origin=c]{90}{Ours}}            & R1-D1 (Generative)              &    49.9        &    65.4                     &  848M                 \\
                     & R1-D1 (Extractive)                  &    50.8     &    65.0                    &  445M                 \\ 
                     & R2-D2 (1.7M)                        &    52.6      &    68.0                   &  1.29B                     \\ 
                     & R2-D2 (21M)                         &    \textbf{55.0}   &    69.9             &  1.29B                     \\
                     & R2-D2 (21M) w/ HN-DPR               &    \textbf{55.9}   &    -             &  1.29B                     \\ \cmidrule[\heavyrulewidth]{2-5} 
\end{tabular}}
    \caption{Comparison with the state-of-the-art in EM. \#$\theta$ denotes the estimated amount of model parameters. Symbol $^*$ denotes systems with pruned or compressed index. Symbol $^{-}$ reports the result only for smaller system with $220M$ parameters.}
    \label{tab:systems}
\end{table}

\begin{description}[style=unboxed,leftmargin=0em,listparindent=\parindent]
\setlength\parskip{0em}
\item[Overall results.] The effectiveness of our approach is compared with the state-of-the-art in Table \ref{tab:systems}. 
Our system composed of just the retriever and FiD reader R1-D1 (Generative) shows inferior performance compared to FiD-large. 
This is most likely caused by 4 times fewer passages at its input, as in \citet{izacard2020leveraging}. 
In contrast, our ELECTRA based extractive reader R1-D1 (Extractive) shows large gains compared to extractive state-of-the-art, while having the same retriever as DPR. 
We hypothesize this may be caused by its better pre-training method, which shows strong performance through variety of tasks, but also due to training and inference with extra large input size of 128 passages and better objective. 
Finally, notice that our pruned system R2-D2 (1.7M) is competitive with FiD even when using just 1.7M knowledge corpus, and our full system R2-D2 (21M) is competitive even with FiD++, which uses DPR retriever improved via knowledge distillation and 26M passage corpus which also includes lists. 
Additionally, we evaluate our model with better retrieval model (HN-DPR) based on DPR checkpoint where hard negatives are mined using the retrieval model itself\footnote{\url{https://cutt.ly/Ux5Yt4h}}.

\begin{table*}[t]
    \centering
    \scalebox{0.76}{\begin{tabular}{ccc|rrr|rrr|rrr|rrr}
\toprule
\textbf{Passage}   & \multirow{2}{*}{\textbf{Readers}} & \multirow{2}{*}{\textbf{Fusion}} & \multicolumn{3}{c|}{\textbf{NQ-Open (dev)}}                                        & \multicolumn{3}{c|}{\textbf{NQ-Open}}                                              & \multicolumn{3}{c|}{\textbf{TriviaQA}}                                             & \multicolumn{3}{c}{\textbf{EfficientQA}}                                         \\
\textbf{reranking} &                                   &                                  & \multicolumn{1}{c}{1.7M} & \multicolumn{1}{c}{21M} & \multicolumn{1}{c|}{$\Delta$} & \multicolumn{1}{c}{1.7M} & \multicolumn{1}{c}{21M} & \multicolumn{1}{c|}{$\Delta$} & \multicolumn{1}{c}{1.7M} & \multicolumn{1}{c}{21M} & \multicolumn{1}{c|}{$\Delta$} & \multicolumn{1}{c}{1.7M} & \multicolumn{1}{c}{21M} & \multicolumn{1}{c}{$\Delta$} \\ \midrule
-                  & ext                               & -                                & 46.53                    & 48.12                   & -1.59                         & 48.64                    & 50.78                   & -2.14                         & 63.18                    & 65.01                   & -1.83                         & 44.50                    & 47.00                   & -2.50                        \\
-                  & gen                               & -                                & 46.23                    & 48.30                   & -2.07                         & 48.39                    & 49.92                   & -1.53                         & 63.72                    & 65.38                   & -1.66                         & 43.50                    & 44.83                   & -1.33                        \\
-                  & ext+gen                           & naive                            & 47.05                    & 49.14                   & -2.09                         & 50.00                    & 51.88                   & -1.88                         & 64.04                    & 66.17                   & -2.13                         & 45.61                    & 47.06                   & -1.45                        \\
-                  & ext+gen                           & aggr                             & 49.05                    & 50.87                   & -1.82                         & 51.94                    & 54.13                   & -2.19                         & 65.41                    & 67.42                   & -2.01                         & 47.50                    & 50.44                   & -2.94                        \\
-                  & ext+gen                           & aggr+bd                          & 49.35                    & 51.18                   & -1.83                         & 51.88                    & 54.07                   & -2.19                         & 65.69                    & 67.37                   & -1.68                         & 47.28                    & 49.72                   & -2.44                        \\ \midrule
\checkmark         & ext                               & -                                & 46.64                    & 48.38                   & -1.74                         & 48.92                    & 50.72                   & -1.80                         & 63.51                    & 65.46                   & -1.95                         & 45.06                    & 47.56                   & -2.50                        \\
\checkmark         & gen                               & -                                & 47.11                    & 49.40                   & -2.29                         & 48.31                    & 50.69                   & -2.38                         & {\ul 67.18}              & {\ul 69.14}             & -1.96                         & 45.22                    & 47.33                   & -2.11                        \\
\checkmark         & ext+gen                           & naive                            & 47.78                    & 49.99                   & -2.21                         & 50.33                    & 52.44                   & -2.11                         & 66.02                    & 68.01                   & -1.99                         & 46.78                    & 49.11                   & -2.33                        \\
\checkmark         & ext+gen                           & aggr                             & 49.89                    & 51.80                   & -1.91                         & 52.38                    & 54.90                   & -2.52                         & 66.86                    & 68.66                   & -1.80                         & \textbf{49.44}           & 52.00                   & -2.56                        \\
\checkmark         & ext+gen                           & aggr+bd                          & \textbf{50.25}           & \textbf{52.07}          & -1.82                         & \textbf{52.58}           & \textbf{54.99}          & -2.41                         & \textbf{67.96}           & \textbf{69.94}          & -1.98                         & 49.22                    & \textbf{52.22}          & -3.00                        \\ \bottomrule
\end{tabular}}%
    \caption{Ablation study. The $\Delta$ column shows the exact match difference caused by pruning.}
    \label{tab:ablation_study}
\end{table*}

\usepgfplotslibrary{colorbrewer}
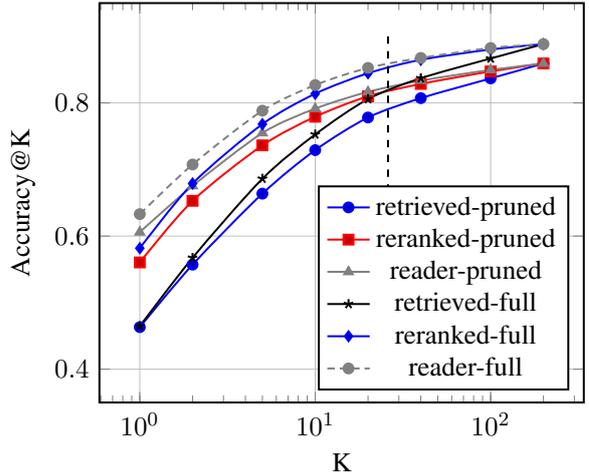
\begin{figure}[t]
    \centering
    \scalebox{0.93}{\begin{tikzpicture}
    \begin{axis}[
            smooth,
            thick,
            xmode=log,
            legend pos=south east,
            xlabel=K,
            ylabel=Accuracy@K,
            skip coords between index={2}{4},
            skip coords between index={5}{9},
            skip coords between index={10}{19},
            skip coords between index={20}{39},
            skip coords between index={40}{99},
            skip coords between index={100}{199},
            xmajorgrids,
            ymajorgrids,
        ]
        \addplot table [x=k, y=r1-p, col sep=comma] {data/accuracy-at-k_nq-open-test.csv};
        \addplot table [x=k, y=r2-p, col sep=comma] {data/accuracy-at-k_nq-open-test.csv};
        \addplot+[mark=triangle*, color=gray, mark options={solid,fill=gray}] table [x=topk, y=selection, col sep=comma] {data/accuracy-at-k_reader-pruned_nq-open-test.csv};
        \addplot table [x=k, y=r1-f, col sep=comma] {data/accuracy-at-k_nq-open-test.csv};
        \addplot table [x=k, y=r2-f, col sep=comma] {data/accuracy-at-k_nq-open-test.csv};
        \addplot+[color=gray, mark options={solid,fill=gray}] table [x=topk, y=selection, col sep=comma] {data/accuracy-at-k_reader-full_nq-open-test.csv};

        \addplot[dashed,thick, samples=50, smooth,domain=0:6,black] coordinates {(26,0.4)(26,0.9)};
        \legend{
            retrieved-pruned, 
            reranked-pruned, 
            reader-pruned, 
            retrieved-full, 
            reranked-full, 
            reader-full
        };
    \end{axis}%
\end{tikzpicture}
    \caption{
    \label{fig:acc_k_nq}Accuracy@K on test-data of NQ-Open.}%
\end{figure}%
\item[Reranker performance.]
Next, we analyze the performance of our retriever and reranker with Accuracy@K in Figure \ref{fig:acc_k_nq}. 
The reranker improves the accuracy consistently for both, pruned and full version of our pipeline. 
Remarkably, the pruned version of our pipeline with reranker (reranked-pruned) performs better than the full version only with retriever (retrieved-full) up to $K=26$ paragraphs. 
We observe the similar trend on other datasets, e.g. for TQ-Open test the reranked-pruned improves over retrieved-full up to $K=116$ paragraphs (the analyses are in Appendix \ref{app:accuracy_at_k}).
We also include analysis, where we rerank each passage $p_i$ according its $s^i_{passage}$ score from extractive reader.
We observe results similar to reranker for $K>10$, indicating the reader reranks well on its own. 

\item[Pruner.] Our simple pruning approach achieved $90.63$\% accuracy on NQ-Golden test data and $86.94$\% accuracy on TQ-Golden test data. 
This indicates that there exists a strong prior over the passages of Wikipedia in these open-domain QA datasets. 
Interestingly, pruner still missed 2,133/40,670 (5.2\%) golden passages from the NQ-Golden training data and 6,676/50,502 (13.2\%) from the TQ-Golden training data.

\item[Memory footprint.] 
Furthermore, we compare the memory footprint of our pruned and compressed system's docker image (pruned system) with the image of the full system on NQ-open in Figure~\ref{char:component_sizes}. 
The total uncompressed size of an image is 81.01GiB while the size of the pruned image is 5.96GiB (92.6\% less). 
Here, \emph{codes} are python code and configurations, \emph{corpus} is an sqlite3 database of passages, and \emph{binaries} are the OS with python libraries. 
We save \emph{dense index} as a raw \texttt{h5} matrix. Interestingly, the dense corpus has a similar space requirements as the \emph{parameters} of all 4 models used in this work.

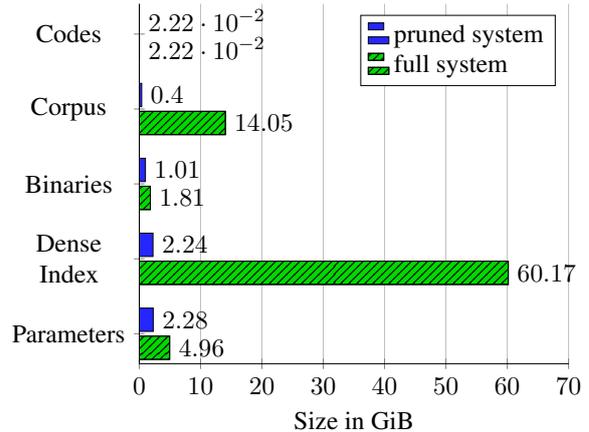
\begin{figure}
    \centering
    \scalebox{0.88}{\begin{tikzpicture}[line cap=round,font=\normalsize]
\begin{axis}[
    width=8cm, 
    height=7cm,
    xbar,
    xmin=0.01,
    xmax=70, 
    xlabel = Size in GiB,
    axis x line=bottom,
    log origin=infty,
    enlarge y limits=0.01,
    enlargelimits=false,
    bar width=18pt,
    axis y line=left,
    enlarge y limits=0.10, 
    ytick = data,
    xticklabel style={
      /pgf/number format/fixed
    },
    xmajorgrids=true,
    y tick label style={text width=48pt,align=center,rotate=0},
    symbolic y coords={
        Parameters, 
        Dense Index, 
        Binaries, 
        Corpus, 
        Codes
    },
    nodes near coords,
    nodes near coords align=horizontal,
    legend style={
      cells={anchor=west},
      legend pos=north east,
    },
    reverse legend
 ]
    \addplot[xbar, bar width=10pt, fill=black!15!green, postaction={pattern=north east lines}] coordinates {
        (0.02222442627,Codes)
        (14.04961014,Corpus)
        (1.81256485,Binaries)
        (60.16888531,Dense Index)
        (4.958679843,Parameters)
     };
    \addplot[xbar, bar width=10pt, fill=white!15!blue] coordinates { 
        (0.0222244262695313,Codes) 
        (0.398887927643955,Corpus) 
        (1.01264190673828,Binaries)
        (2.24073219113052,Dense Index)
        (2.28196194581687,Parameters)
    };
    \legend{full system, pruned system}
\end{axis}
\end{tikzpicture}
    \caption{Component sizes inside the docker image.}%
    \label{char:component_sizes}
\end{figure}%
\item[Ablations.] The ablations are listed in Table~\ref{tab:ablation_study}. 
We ablate results with and without using passage reranker (first column), with separate readers and their combination (second column) and with different stages of component fusion (third column). 
Namely, performing a \emph{naive} answer re-ranking by generative reader means the system chooses the most probable answer span among the top-$M$ spans provided by the extractive reader according to generative reader log-probabilities as shown in equation \eqref{eq:genrerank}. 
Analogously, the \emph{aggr} fusion denotes that the system chooses the most probable answer span according to aggregated scores, as in equation \eqref{eq:aggloss}. 
Finally, the \emph{aggr+bd} fusion denotes the binary decision, as shown in equation \eqref{eq:bdformula}. 

As expected, we observe that reranker improves the results consistently for generative model in all but one case (NQ-Open (1.7M)). 
The gains are especially large for TQ-Open (over 3 EM, underscored in Table). 
In fact, the results are very close or better to \citet{izacard2020leveraging}, suggesting that using the FiD reader with smaller context window and reranker is a reasonable alternative to memory inefficient FiD with large input size.
Furthermore as expected, the extractive reader without reranker already has top-128 passages at the input, and improvements from the passage reranking are only negligible if any (less than 1 EM).

Finally, the results on NQ-Open and EfficientQA suggest applying the binary decision does not bring large improvements over the score aggregation if any. 
However, notice that this is not the case for TriviaQA, where the generative reader performs significantly better compared to extractive reader, suggesting both component fusions play important role in the system.

\begin{table}[ht]
    \centering
    \begin{tabular}{c|rrrr}
\toprule
$\boldsymbol{P}_*$ & \multicolumn{1}{c}{$\emptyset$} & \multicolumn{1}{c}{$\{r\}$} & \multicolumn{1}{c}{$\{{rr}\}$} & \multicolumn{1}{c}{$\{{r},{rr}\}$} \\ 
\midrule
$\{e\}$                                 & 50.86                           & 51.41                       & 51.80                      & 51.27                              \\
$\{g\}$                                 & 53.43                           & 53.68                       & 53.52                      & 53.63                              \\
$\{e, g\}$                              & 54.96                           & \bf 55.07                   & \bf 55.07                  & 54.96                            \\
\bottomrule
\end{tabular}
    \caption{Results for different pipeline components used for score aggregation on NQ. See text below for details.}
    \label{tab:score-aggr}
\end{table}
\begin{table}[ht]
    \centering
    \begin{tabular}{c|rrrr}
\toprule
$\boldsymbol{P}_*$ & \multicolumn{1}{c}{$\emptyset$} & \multicolumn{1}{c}{$\{r\}$} & \multicolumn{1}{c}{$\{{rr}\}$} & \multicolumn{1}{c}{$\{{r},{rr}\}$} \\ 
\midrule
$\{e\}$                                 & 53.38                           & 53.60                       & 53.49                          & 53.35                              \\
$\{g\}$                                 & 53.57                           & 53.49                       & 53.57                          & 53.55                              \\
$\{e, g\}$                              & 55.01                           & \bf 55.18                   & 54.96                          & 55.01                              \\
\bottomrule
\end{tabular}
    \caption{Results for binary decision on NQ for different aggregated pipeline components from Table \ref{tab:score-aggr}.}
    \label{tab:binary-decision}
\end{table}

\item[Component fusion.] 
Furthermore, we analyze the performance of each component combination in the score aggregation and its impact on the component fusion via binary decision.
Both fusions are tuned on validation data and reported on test data of the NQ-Open dataset with full index. See Appendix \ref{app:additional_comp_fusion} for analysis on additional datasets.
Table \ref{tab:score-aggr} shows all relevant combinations of ranker \emph{r}, reranker \emph{rr}, extractive reader \emph{e} and generative reader \emph{g} probabilities used in score aggregation. 
In overall, we observe minor improvements up to ~1EM when combining retriever and reranker scores with reader.
The impact of adding a binary decision after the score aggregation is shown in Table \ref{tab:binary-decision}. 
Interestingly, the binary decision component significantly improves the performance only without reranked answer scores (first row in both tables), which probably corresponds to an ensemble effect. 
However, fusing the generative and extractive reader via binary decision performs significantly worse on NQ-Open than fusing both readers together with score aggregation (first row in Table \ref{tab:binary-decision} vs. last row in Table~\ref{tab:score-aggr}). As already noted in ablations, we find this to be quite the opposite for TQ-Open. We hypothesize that the binary decision is strong in cases, where generative reader performs better to extractive reader (the case of TQ-Open).

\begin{figure}
    \centering
    \scalebox{0.90}{\begin{tikzpicture}
    \begin{axis}[
            smooth,
            thick,
            xmode=log,
            legend pos=south east,
            xlabel=Number of passages,
            ylabel=EM,
            xmajorgrids,
            ymajorgrids,
            ymin=28,
            ymax=58
        ]
        
        \addplot table [x=p_total, y=r2d1_ext, col sep=comma] {data/index-size-analysis-NQ.csv};
        \addplot table [x=p_total, y=r2d1_gen, col sep=comma] {data/index-size-analysis-NQ.csv};
        \addplot table [x=p_total, y=r2d2++, col sep=comma] {data/index-size-analysis-NQ.csv};
        \addplot[dashed,thick, samples=50, smooth,domain=0:6,black] coordinates {(1700000,28)(1700000,58)};
        \legend{R2-D1 ext. reader, R2-D1 gen. reader, R2-D2, pruned index};

    \end{axis}%
\end{tikzpicture}
    \caption{Index size analysis on NQ-Open test data. Note the R2-D1 system uses retriever and reranker (R2) but only one reader (D1).}
    \label{fig:nq_size_analysis}
\end{figure}
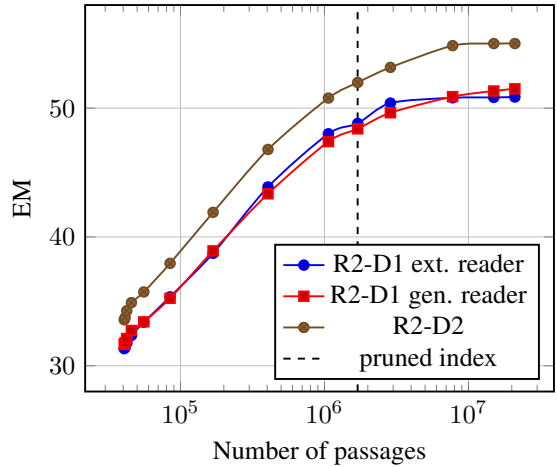

\item[Effect of index size.]
Next we analyze the effect of index size. 
We start by including all the golden passages from the training data (40,670 for NQ-Open, 50,502 for TQ-Open). 
We find the difference between using the full index and only golden passages is about 21EM (21.27 for NQ-Open, 21.01 for TQ-Open). 
Next, we consider adding the passages according to ranking produced by pruner.
We plot the system performance on NQ-Open test data as a function of index size in Figure \ref{fig:nq_size_analysis}. 
Results on TQ-Open follow the same trends and can be found in Appendix \ref{sec:trivia_size_analysis}.

\item[Connection between retrieval embeddings and pruner.]
\label{sec:results_and_analysis_embconnection}
Finally, we analyze whether the DPR embeddings capture the same phenomena as our pruner does. 
Starting with basic statistics we compute the mean and the variance vectors of $d$-dimensional embeddings representing pruned (1.7M) set of documents $P$ and those which represent the rest of the knowledge base $N$. 
We find that computing the L2 distance between mean and variance yields order of magnitude different results than the distance between randomly permuted splits of $P \cup N$ with the same size. 
Next we found the significant difference between average length of embedding vectors from $P$ and $N$. 
Conclusively, we train a logistic regression classifier on balanced dataset constructed from $P$ and subset of $N$, which predicts whether the passage belongs into the pruned set $P$ or not based on its embedding. 
We found the classifier achieves 84.1\% accuracy on the dev set, confirming our hypothesis that the apriori relevance is indeed captured in these embeddings.

\end{description}

\section{Related Work}
\begin{description}[style=unboxed,leftmargin=0em,listparindent=\parindent,parsep=0pt,]
\item[Pruning the document space.]


\citet{min2021neurips} presented a simple baseline which includes an index containing 1.65M passages. 
These include all passages from the Wikipedia articles assigned to top-5 positive passages from DPR training data for NQ \cite{karpukhin2020dense} (therefore golden passage, and highest-ranking BM25 passages to each question). 
However this pruning approach led to -6.7 decrease in exact match, while ours led to at most -3 EM with similar amount of passages.

Similar to our work, \citet{izacard2020memory} employed three strategies to reduce the size of the index: the first is to learn a DPR encoder with embeddings projected to lower dimension, the second is to use product quantization \cite{gray1998quantization}, and the third is a linear classifier (a pruner) which filters articles based on their title and list of categories. 
Nonetheless, their pruning approach leads to \textasciitilde 4 EM loss in performance with FiD-large, while still retaining 10M passages.

\item[Dense retrieval.]
\citet{lee-etal-2019-latent} proposed the unsupervised pretraining method named inverze-cloze task. 
Fine-tuning such pretrained system via distant supervision surpassed the BM25 baseline for the first time in open-QA. 
\citet{guu2020realm} demonstrated pre-training retriever and reader from scratch using an unsupervised masked language model. 
\citet{xiong2020progressively} demonstrated a pre-training method that does not require massive computational resources for unsupervised pre-training. 
\citet{karpukhin2020dense} adopted supervised-only approach based on dual-encoder architecture, which surprisingly overtook the unsupervised approaches. 
\citet{khattab2020relevance} adopted COLBERT \cite{khattab2020colbert}, an approach introduced in IR that models fine-grained interaction between question and passage, for open-domain QA. \citet{lewis2021paq} generated a colossal corpus of 65M questions and their respective answers. 
Given a question, they showed it is possible to match the state-of-the-art performance by picking an answer of the most similar question according to the learned model. 
\citet{izacard2020distilling} demonstrated a way of distilling FiD reader knowledge into retriever, improving its retrieval significantly, while also allowing to train retriever from scratch without any passage relevance supervision.

\item[Passage reranker.]

Previous work in QA based on neural nets used bi-LSTM encoders \cite{wang2018r3,lee2018ranking} that score each document independently. Over time, bi-LSTM were replaced by BERT-like transformer encoders \cite{qiao2019understanding,wang2019multi}. 
 For document ranking, \citet{nogueira2019multistage} proposed a multi-stage architecture. The first stage scores each document independently, and the second estimates the more relevant document from all document pairs. Another document ranking approach uses the seq2seq model to generate a true or false answer to the document's relevance to the query \cite{nogueira2020document}. 
Recent works have often focused on effective reranking. \citet{xin2020early} achieved inference speedup using early exiting, \citet{jang2020document} proposed a smaller and faster model, and \citet{mao2021reader} came up with a method which uses reader's predictions to rerank the passages.
\citet{iyer2020reconsider} marked answer predictions from reader in passages and learned to re-rank top answers along with their passage context.
Our reranker is most similar to \citet{nogueira2019passage, luan2020sparse}, except that unlike in IR, we assume there is just one correct passage and thus train our model via categorical cross-entropy.

\item[Reader.] 
Recent work considers two approaches towards modeling the reader --- generative and extractive. The generative reader generates an answer while conditioned on question or relevant passages \cite{roberts2020much,lewis2020retrieval}. \citet{min2020ambigqa} proposed to concatenate a question with top retriever passages as the input of pretrained seq2seq generative model. \citet{izacard2020leveraging} showed its suffices to concatenate the passages in the decoder of seq2seq model, increasing the amount of top-passages the model can depend on dramatically.
The extractive reader used in open-QA assumes that the answer is a continuous span string in multiple paragraphs  \cite{chen2017reading}. \citet{clark-gardner-2018-simple} proposed to aggregate the probabilities of distantly supervised answer matches via maximum marginal likelihood (MML). \citet{lin2018denoising} proposed to denoise distantly supervised answer string matches in MML via paragraph-ranker. \citet{min2019discrete} introduced a learning objective, which decides randomly whether to use MML objective or hard expectation-minimization via continuous annealing scheme during the training. \citet{cheng2020probabilistic} experimented with different assumptions for MML, showing improvement when marginalizing over components of span probability independently. \citet{fajcik2020rethinking} proposed to model joint span probability directly via compound objective, instead of modeling the probability of span's start and end independently.  \citet{karpukhin2020dense} incorporated an independent passage classifier loss to his MML objective.

Unlike others, our work incorporates both, the generative and the extractive approach. While our generative reader follows \citet{izacard2020leveraging}, our extractive reader uses a novel loss function, which includes marginalizing over target passages independently of its other components.

\end{description}

\section{Conclusion}
This work proposed R2-D2, a novel state-of-the-art pipeline for open-domain QA based on 4 components: retriever, reranker, generative reader and extractive reader. Furthermore, it proposed an approach for reducing the pipeline size to fit 6GiB Docker Image. The core idea of our approach was to drastically reduce the colossal number of passages commonly used within the knowledge-base of retrieval-based open-domain QA systems (by 92\%) with only minor loss of performance (-3 EM).
We believe our pipeline composed of multiple heterogeneous components is an ideal benchmark system for future research. Additionally, the pruned index size opens up new possibilities, as it now fits to most modern GPUs. 
However, with such a drastic reduction of knowledge-base, more questions arise: \textit{What is it that makes the passage being apriori relevant? Does this strong prior over passages suggest that these  open-domain answering datasets aren't really ``open''?}. We would like to address these questions in our future research.

\section*{Acknowledgments}
We would like to thank Jan Doležal for implementing an R2-D2 demo.
This work was supported by the Czech Ministry of Education, Youth and Sports, subprogram INTERCOST, project code: LTC18006.
The computation used the infrastructure supported by the Czech Ministry of Education, Youth and Sports from the Large Infrastructures for Research, Experimental Development and Innovations project „IT4Innovations National Supercomputing Center – LM2018140“.

\bibliography{tacl2018}
\bibliographystyle{acl_natbib}

\appendix
\clearpage

\section{Additional Index Size Analysis}
This section contains additional index size analysis analogous to Figure \ref{fig:nq_size_analysis} on Trivia-Open test data shown in Figure \ref{fig:trivia_size_analysis}.
\label{sec:trivia_size_analysis}
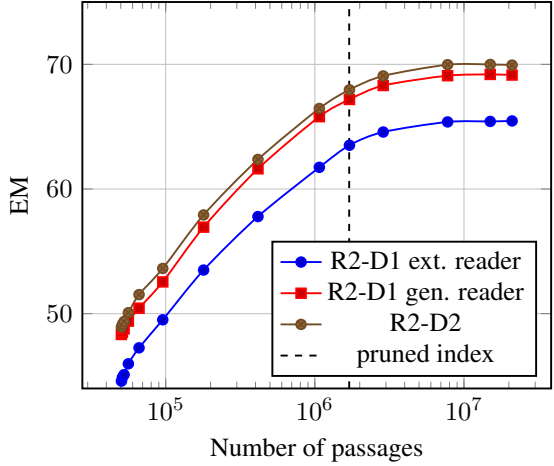
\begin{figure}[H]
    \centering
    \scalebox{0.90}{\begin{tikzpicture}
    \begin{axis}[
            smooth,
            thick,
            xmode=log,
            legend pos=south east,
            xlabel=Number of passages,
            ylabel=EM,
            xmajorgrids,
            ymajorgrids,
            ymin=44,
            ymax=75
        ]
        
        \addplot table [x=p_total, y=r2d1_ext, col sep=comma] {data/index-size-analysis-Trivia.csv};
        \addplot table [x=p_total, y=r2d1_gen, col sep=comma] {data/index-size-analysis-Trivia.csv};
        \addplot table [x=p_total, y=r2d2++, col sep=comma] {data/index-size-analysis-Trivia.csv};
        \addplot[dashed,thick, samples=50, smooth,domain=0:6,black] coordinates {(1700000,44)(1700000,75)};
        \legend{R2-D1 ext. reader, R2-D1 gen. reader, R2-D2, pruned index};

    \end{axis}%
\end{tikzpicture}
    \caption{Index size analysis on TQ-Open test data.}
    \label{fig:trivia_size_analysis}
\end{figure}

\section{Additional Accuracy Analysis}
\label{app:accuracy_at_k}
Analysis of Accuracy@K on EfficientQA data is shown in Figure \ref{fig:accuracy-at-k_efficientqa}, on NQ-open validation data in Figure \ref{fig:accuracy-at-k_nq-open-dev} and on TriviaQA validation data in Figure \ref{fig:accuracy-at-k_trivia-dev} and test data in Figure \ref{fig:accuracy-at-k_trivia-test}. The pruned version of our pipeline with reranker (reranked-pruned) performs better than the full version only with retriever (retrieved-full):
\begin{itemize}[style=unboxed,leftmargin=0em,listparindent=\parindent]   \setlength\parskip{0em}
    \item on NQ-open (dev) up to $K=32$ paragraphs,
    \item on NQ-open (test) up to $K=26$ paragraphs,
    \item on EfficientQA up to $K=43$ paragraphs,
    \item on TriviaQA (dev) up to $K=110$ paragraphs,
    \item on TriviaQA (test) up to $K=116$ paragraphs.
\end{itemize}

\begin{figure}[H]
    \centering
    \scalebox{0.90}{\begin{tikzpicture}
    \begin{axis}[
            smooth,
            thick,
            xmode=log,
            legend pos=south east,
            xlabel=K,
            ylabel=Accuracy@K,
            skip coords between index={2}{4},
            skip coords between index={5}{9},
            skip coords between index={10}{19},
            skip coords between index={20}{39},
            skip coords between index={40}{99},
            skip coords between index={100}{199},
            xmajorgrids,
            ymajorgrids,
        ]
        \addplot table [x=k, y=r1-p, col sep=comma] {data/accuracy-at-k_nq-open-dev.csv};
        \addplot table [x=k, y=r2-p, col sep=comma] {data/accuracy-at-k_nq-open-dev.csv};        \addplot+[mark=triangle*, color=gray, mark options={solid,fill=gray}] table [x=topk, y=selection, col sep=comma] {data/accuracy-at-k_reader-pruned_nq-open-dev.csv};
        \addplot table [x=k, y=r1-f, col sep=comma] {data/accuracy-at-k_nq-open-dev.csv};
        \addplot table [x=k, y=r2-f, col sep=comma] {data/accuracy-at-k_nq-open-dev.csv};
        \addplot+[color=gray, mark options={solid,fill=gray}] table [x=topk, y=selection, col sep=comma] {data/accuracy-at-k_reader-full_nq-open-dev.csv};

        \addplot[dashed,thick, samples=50, smooth,domain=0:6,black] coordinates {(32,0.4)(32,0.9)};
        \legend{
            retrieved-pruned, 
            reranked-pruned, 
            reader-pruned, 
            retrieved-full, 
            reranked-full, 
            reader-full
        };
    \end{axis}%
\end{tikzpicture}
    \caption{Analysis of Accuracy@K on NQ-Open (dev).}
    \label{fig:accuracy-at-k_nq-open-dev}
\end{figure}
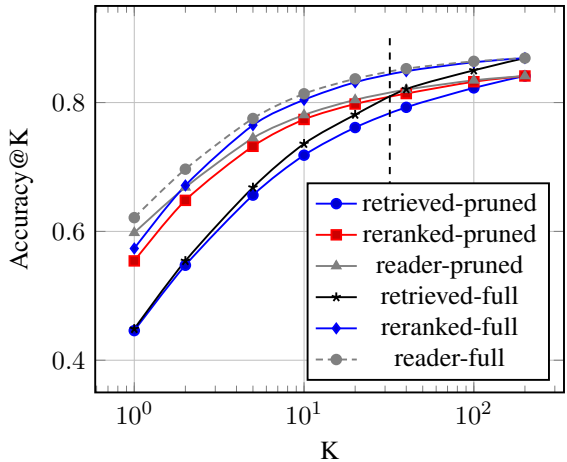

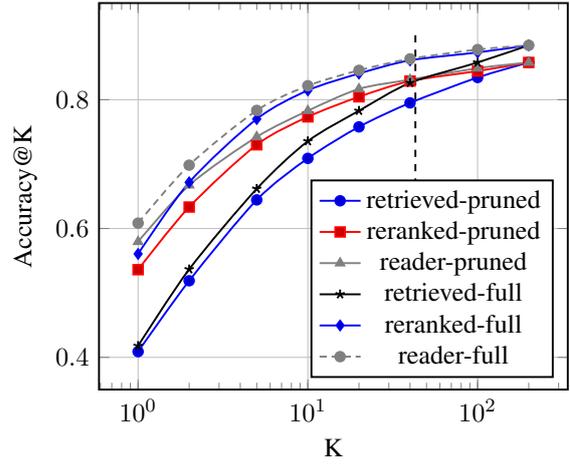
\begin{figure}[H]
    \centering
    \scalebox{0.90}{\begin{tikzpicture}
    \begin{axis}[
            smooth,
            thick,
            xmode=log,
            legend pos=south east,
            xlabel=K,
            ylabel=Accuracy@K,
            skip coords between index={2}{4},
            skip coords between index={5}{9},
            skip coords between index={10}{19},
            skip coords between index={20}{39},
            skip coords between index={40}{99},
            skip coords between index={100}{199},
            xmajorgrids,
            ymajorgrids,
        ]
        \addplot table [x=k, y=r1-p, col sep=comma] {data/accuracy-at-k_efficientqa.csv};
        \addplot table [x=k, y=r2-p, col sep=comma] {data/accuracy-at-k_efficientqa.csv};
        \addplot+[mark=triangle*, color=gray, mark options={solid,fill=gray}] table [x=topk, y=selection, col sep=comma] {data/accuracy-at-k_reader-pruned_efficientqa.csv};
        \addplot table [x=k, y=r1-f, col sep=comma] {data/accuracy-at-k_efficientqa.csv};
        \addplot table [x=k, y=r2-f, col sep=comma] {data/accuracy-at-k_efficientqa.csv};
        \addplot+[color=gray, mark options={solid,fill=gray}]  table [x=topk, y=selection, col sep=comma] {data/accuracy-at-k_reader-full_efficientqa.csv};
                
        \addplot[dashed,thick, samples=50, smooth,domain=0:6,black] coordinates {(43,0.4)(43,0.9)};
        \legend{
            retrieved-pruned, 
            reranked-pruned, 
            reader-pruned, 
            retrieved-full, 
            reranked-full, 
            reader-full
        };
    \end{axis}%
\end{tikzpicture}
    \caption{Analysis of Accuracy@K on EfficientQA.}
    \label{fig:accuracy-at-k_efficientqa}
\end{figure}

\begin{figure}[H]
    \centering
    \scalebox{0.90}{\begin{tikzpicture}
    \begin{axis}[
            smooth,
            thick,
            xmode=log,
            legend pos=south east,
            xlabel=K,
            ylabel=Accuracy@K,
            skip coords between index={2}{4},
            skip coords between index={5}{9},
            skip coords between index={10}{19},
            skip coords between index={20}{39},
            skip coords between index={40}{99},
            skip coords between index={100}{199},
            xmajorgrids,
            ymajorgrids,
        ]
        \addplot table [x=k, y=r1-p, col sep=comma] {data/accuracy-at-k_trivia-dev.csv};
        \addplot table [x=k, y=r2-p, col sep=comma] {data/accuracy-at-k_trivia-dev.csv};
        \addplot+[mark=triangle*, color=gray, mark options={solid,fill=gray}] table [x=topk, y=selection, col sep=comma] {data/accuracy-at-k_reader-pruned_trivia-dev.csv};
        \addplot table [x=k, y=r1-f, col sep=comma] {data/accuracy-at-k_trivia-dev.csv};
        \addplot table [x=k, y=r2-f, col sep=comma] {data/accuracy-at-k_trivia-dev.csv};
        \addplot+[color=gray, mark options={solid,fill=gray}] table [x=topk, y=selection, col sep=comma] {data/accuracy-at-k_reader-full_trivia-dev.csv};
        
        \addplot[dashed,thick, samples=50, smooth,domain=0:6,black] coordinates {(110,0.5)(110,0.9)};
        \legend{
            retrieved-pruned, 
            reranked-pruned, 
            reader-pruned, 
            retrieved-full, 
            reranked-full, 
            reader-full
        };
    \end{axis}%
\end{tikzpicture}
    \caption{Analysis of Accuracy@K on TriviaQA (dev).}
    \label{fig:accuracy-at-k_trivia-dev}
\end{figure}
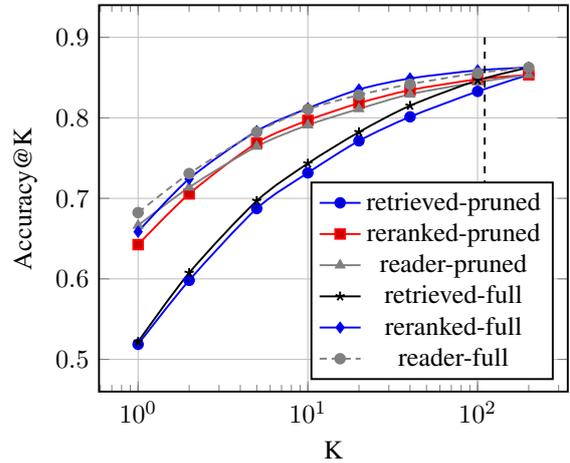

\begin{figure}[H]
    \centering
    \scalebox{0.90}{\begin{tikzpicture}
    \begin{axis}[
            smooth,
            thick,
            xmode=log,
            legend pos=south east,
            xlabel=K,
            ylabel=Accuracy@K,
            skip coords between index={2}{4},
            skip coords between index={5}{9},
            skip coords between index={10}{19},
            skip coords between index={20}{39},
            skip coords between index={40}{99},
            skip coords between index={100}{199},
            xmajorgrids,
            ymajorgrids,
        ]
        \addplot table [x=k, y=r1-p, col sep=comma] {data/accuracy-at-k_trivia-test.csv};
        \addplot table [x=k, y=r2-p, col sep=comma] {data/accuracy-at-k_trivia-test.csv};
        \addplot+[mark=triangle*, color=gray, mark options={solid,fill=gray}] table [x=topk, y=selection, col sep=comma] {data/accuracy-at-k_reader-pruned_trivia-test.csv};
        \addplot table [x=k, y=r1-f, col sep=comma] {data/accuracy-at-k_trivia-test.csv};
        \addplot table [x=k, y=r2-f, col sep=comma] {data/accuracy-at-k_trivia-test.csv};
        \addplot+[color=gray, mark options={solid,fill=gray}] table [x=topk, y=selection, col sep=comma] {data/accuracy-at-k_reader-full_trivia-test.csv};
        \addplot[dashed,thick, samples=50, smooth,domain=0:6,black] coordinates {(116,0.5)(116,0.9)};
        \legend{
            retrieved-pruned, 
            reranked-pruned, 
            reader-pruned, 
            retrieved-full, 
            reranked-full, 
            reader-full
        };
    \end{axis}%
\end{tikzpicture}
    \caption{Analysis of Accuracy@K on TriviaQA (test).}
    \label{fig:accuracy-at-k_trivia-test}
\end{figure}
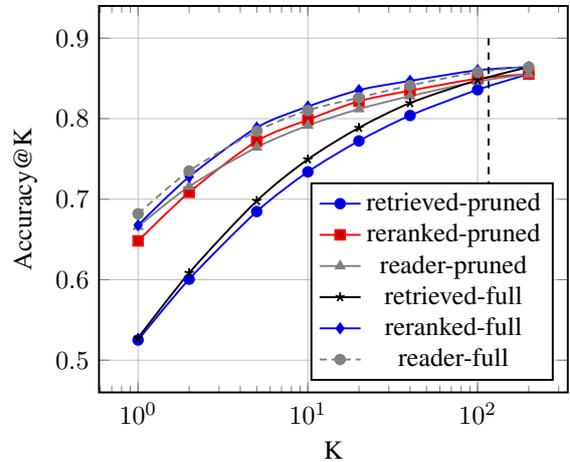

\section{Additional Component Fusion Analysis}
\label{app:additional_comp_fusion}
This section includes results analogical to Tables \ref{tab:score-aggr}, \ref{tab:binary-decision} on validation data of NQ-Open (Tables \ref{tab:score-aggr_nqdev}, \ref{tab:binary-decision_nqdev}), EfficientQA (Tables \ref{tab:score-aggr_effqa}, \ref{tab:binary-decision_effqa}) and TQ-Open (Tables \ref{tab:score-aggr_tqdev}, \ref{tab:binary-decision_tqdev}, \ref{tab:score-aggr_tqtest}, \ref{tab:binary-decision_tqtest}).
\begin{table}[H]
    \centering
    \scalebox{1.0}{\begin{tabular}{c|rrrr}
\toprule
$\boldsymbol{P}_*$ & \multicolumn{1}{c}{$\emptyset$} & \multicolumn{1}{c}{$\{r\}$} & \multicolumn{1}{c}{$\{{rr}\}$} & \multicolumn{1}{c}{$\{{r},{rr}\}$} \\ \midrule
$\{e\}$                                 & 48.50                           & 49.06                       & 48.77                          & 49.27                              \\
$\{g\}$                                  & 49.91                           & 50.51                       & 50.26                          & 50.53                              \\
$\{e, g\}$                              & 51.95                           & 52.18                       & 52.02                          & \bf 52.10                             \\
\bottomrule
\end{tabular}}
    \caption{Score aggregation -- NQ-Open (dev).}
    \label{tab:score-aggr_nqdev}
\end{table}
\begin{table}[H]
\vspace{-0.5cm}
    \centering
    \scalebox{1.0}{\begin{tabular}{c|rrrr}
\toprule
$\boldsymbol{P}_*$ & \multicolumn{1}{c}{$\emptyset$} & \multicolumn{1}{c}{$\{r\}$} & \multicolumn{1}{c}{$\{rr\}$} & \multicolumn{1}{c}{$\{r,rr\}$} \\ \midrule
$\{e\}$            & 50.71                           & 51.20                       & 50.89                        & 51.31                          \\
$\{g\}$            & 50.18                           & 50.81                       & 50.54                        & 50.84                          \\
$\{e,g\}$          & 52.31                           & \bf 52.46                       & 52.35                        & 52.40                          \\ \bottomrule
\end{tabular}}
    \caption{Binary decision -- NQ-Open (dev).}
    \label{tab:binary-decision_nqdev}
\end{table}
\begin{table}[H]
\vspace{-0.5cm}
    \centering
    \scalebox{1.0}{\begin{tabular}{crrrr}
\toprule
$\boldsymbol{P}_*$ & \multicolumn{1}{c}{$\emptyset$} & \multicolumn{1}{c}{$\{r\}$} & \multicolumn{1}{c}{$\{rr\}$} & \multicolumn{1}{c}{$\{r,rr\}$} \\ \midrule
$\{e\}$   & 47.56 & 48.33 & 48.89 & 48.72          \\
$\{g\}$   & 49.11 & 49.56 & 50.22 & 50.11          \\
$\{e,g\}$ & 50.78 & 51.67 & 50.89 & \textbf{52.00} \\ \bottomrule
\end{tabular}}
    \caption{Score aggregation -- EfficientQA.}
    \label{tab:score-aggr_effqa}
\end{table}
\begin{table}[H]
\vspace{-0.5cm}
    \centering
    \scalebox{1.0}{\begin{tabular}{crrrr}
\toprule
$\boldsymbol{P}_*$ & \multicolumn{1}{c}{$\emptyset$} & \multicolumn{1}{c}{$\{r\}$} & \multicolumn{1}{c}{$\{rr\}$} & \multicolumn{1}{c}{$\{r,rr\}$} \\ \midrule
$\{e\}$   & 48.33 & 50.06 & 49.39 & 49.67          \\
$\{g\}$   & 48.94 & 49.50 & 50.06 & 49.72          \\
$\{e,g\}$ & 50.78 & 51.83 & 50.94 & \textbf{52.22} \\ \bottomrule
\end{tabular}}
    \caption{Binary decision -- EfficientQA.}
    \label{tab:binary-decision_effqa}
\end{table}
\begin{table}[H]
\vspace{-0.5cm}
    \centering
    \scalebox{1.0}{\begin{tabular}{crrrr}
\toprule
$\boldsymbol{P}_*$ & \multicolumn{1}{c}{$\emptyset$} & \multicolumn{1}{c}{$\{r\}$} & \multicolumn{1}{c}{$\{rr\}$} & \multicolumn{1}{c}{$\{r,rr\}$} \\ \midrule
$\{e\}$   & 65.07 & 65.21 & 65.16 & 65.24 \\
$\{g\}$   & 67.68 & 67.72 & 67.73 & 67.76 \\
$\{e,g\}$ & 68.13 & \textbf{68.19} & 68.17 & 68.12 \\ \bottomrule
\end{tabular}}
    \caption{Score aggregation -- TQ-Open (dev).}
    \label{tab:score-aggr_tqdev}
\end{table}
\begin{table}[H]
\vspace{-0.5cm}
    \centering
    \scalebox{1.0}{\begin{tabular}{crrrr}
\toprule
$\boldsymbol{P}_*$ & \multicolumn{1}{c}{$\emptyset$} & \multicolumn{1}{c}{$\{r\}$} & \multicolumn{1}{c}{$\{rr\}$} & \multicolumn{1}{c}{$\{r,rr\}$} \\ \midrule
$\{e\}$   & 69.03 & 69.03 & 69.01 & 68.99 \\
$\{g\}$   & 69.54 & 69.46 & 69.62 & 69.70 \\
$\{e,g\}$ & 69.77 & \textbf{69.79} & 69.67 & 69.61 \\ \bottomrule
\end{tabular}}
    \caption{Binary decision -- TQ-Open (dev).}
    \label{tab:binary-decision_tqdev}
\end{table}
\begin{table}[H]
\vspace{-0.5cm}
    \centering
    \scalebox{1.0}{\begin{tabular}{crrrr}
\toprule
$\boldsymbol{P}_*$ & \multicolumn{1}{c}{$\emptyset$} & \multicolumn{1}{c}{$\{r\}$} & \multicolumn{1}{c}{$\{rr\}$} & \multicolumn{1}{c}{$\{r,rr\}$} \\ \midrule
$\{e\}$   & 65.54 & 65.64 & 65.60 & 65.61 \\
$\{g\}$   & 68.25 & 68.17 & 68.21 & 68.26 \\
$\{e,g\}$ & 68.45 & 68.57 & \textbf{68.66} & \textbf{68.66} \\ \bottomrule
\end{tabular}}
    \caption{Score aggregation -- TQ-Open (test).}
    \label{tab:score-aggr_tqtest}
\end{table}
\begin{table}[H]
\vspace{-0.5cm}
    \centering
    \scalebox{1.0}{\begin{tabular}{crrrr}
\toprule
$\boldsymbol{P}_*$ & \multicolumn{1}{c}{$\emptyset$} & \multicolumn{1}{c}{$\{r\}$} & \multicolumn{1}{c}{$\{rr\}$} & \multicolumn{1}{c}{$\{r,rr\}$} \\ \midrule
$\{e\}$   & 69.34 & 69.28 & 69.23 & 69.26 \\
$\{g\}$   & 69.76 & 69.71 & 69.65 & 69.77 \\
$\{e,g\}$ & 69.80 & 69.89 & 69.88 & \textbf{69.94} \\ \bottomrule
\end{tabular}}
    \caption{Binary decision -- TQ-Open (test).}
    \label{tab:binary-decision_tqtest}
\end{table}

\section{Data Pre-processing}
\label{app:data_preprocessing}
This section describes how the training datasets for reranker and extractive reader are filtered, and how the distant supervision labeling is generated. Note not each example contains golden passage, as not each example can be mapped to the used dump of Wikipedia. We use the same golden passage mapping as \citet{karpukhin2020dense}.

For passage reranking, the input must contain at least one positive example. We meet this condition either by adding a golden passage or searching for the passage with an answer in the top-400 results retrieved by DPR. In detail about the search, first the Simple tokenizer proposed in DrQA\footnote{\url{https://github.com/facebookresearch/DrQA}} tokenizes each passage and golden answer. The positive example is the best-scored tokenized passage that contains an exact match with one of the tokenized answers. Note the search proceeds in the same way as in DPR\footnote{\url{https://github.com/facebookresearch/DPR}} implementation.

The extractive reader is trained only on samples which contain exact match to at least one of the annotated answers in the top-1 passage, or golden passage if it is available. The exact match is performed on the subword token level (i.e. in ELECTRA's tokenization).

Next, the span annotations are extracted from the passages at the reader's input. Note each sample may contain multiple answers. The annotations for each answer in each sample are obtained differently in retrieved passages and in the golden passage. 
For retrieved passages, we search for the answer's exact matches in passages, and use each match as target annotation. 
For golden passage, we also search for the answer's exact matches in it. If there is none, the answer is soft-matched with single sub-sequence of golden passage, which yields highest non-zero F1 score. The F1 soft-match is also performed on the subword token level. Therefore answers with zero highest F1 soft-match with golden passage and no exact match in any of the reader's input passages are discarded.


Because the brute-force computation of a span with the greatest  nonzero F1 score is potentially very demanding, we found the length limit for spans that are worth searching (see Theorem \ref{the:F1worthSearchingTheorem}).

To compare brute-force with upper bound implementation, we run an experiment on 16,741 passages (retrieved for NQ-Open dev). The average time per passage for brute-force was 121 ms and only 9 ms for implementation with an upper bound.

The soft match is described in Algorithm \ref{alg:softMatch}. It assumes that there is no exact match.

\begin{algorithm}
    \caption{Soft match}
    \label{alg:softMatch}
    
    \begin{algorithmic}[1]
    \Require set of spans S and answer span a
    \Function{SoftMatch}{$\texttt{S}, \texttt{a}$}
        \State $\texttt{actSize} \gets 1$
        \State $\texttt{lenLimit} \gets 2$
        \State $\texttt{bestSpan} \gets \texttt{None}$
        \State $\texttt{bestScore} \gets 0$
        
        \While{$\texttt{actSize} < \texttt{lenLimit}$}
            \ForAll{$t \in S$ of size $\texttt{actSize}$} 
                \State $\texttt{score} \gets \texttt{F1}(t,a)$
                \If{$\texttt{score}>\texttt{bestScore}$}
                    \State $\texttt{bestSpan} \gets t$
                    \State $\texttt{bestScore} \gets \texttt{score}$
                    \State $\texttt{lenLimit} \gets |a|\frac{|t|+|a|-s_{ta}}{s_{ta}}$
                \EndIf
            \EndFor
            \State $\texttt{actSize} \gets \texttt{actSize}+1$
        \EndWhile
    \State \Return $\texttt{bestSpan}$
    \EndFunction
    \end{algorithmic}
\end{algorithm}

\begin{lemma}
\label{lem:upBounIsGreater}
Let $t$ and $a$ be non-empty spans and $0<s_{ta}\leq|a|$ number of shared tokens for them. Then\footnote{|x| symbolises number of tokens in span x} 
\begin{equation}
   |t| \leq |a|\frac{|t|+|a|-s_{ta}}{s_{ta}} \: .
\end{equation}
\end{lemma}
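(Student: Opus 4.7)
The statement is a purely algebraic inequality, so my plan is to clear denominators and reduce it to a non-negativity condition that follows directly from the hypothesis $s_{ta}\le |a|$. Concretely, since $s_{ta}>0$, multiplying the target inequality through by $s_{ta}$ preserves direction and yields the equivalent claim
\[
|t|\, s_{ta} \;\le\; |a|\bigl(|t|+|a|-s_{ta}\bigr).
\]

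The next step is to move all $|t|$ terms to one side. Expanding the right-hand side gives $|a||t|+|a|^2-|a|s_{ta}$, and subtracting $|t|s_{ta}$ from both sides turns the inequality into
\[
0 \;\le\; |a||t| - |t|s_{ta} + |a|^2 - |a|s_{ta} \;=\; (|a|-s_{ta})(|t|+|a|).
\]
At this point the proof is essentially done: by hypothesis $s_{ta}\le |a|$, so the first factor is non-negative, and since $t$ and $a$ are non-empty spans the second factor is strictly positive. Hence the product is non-negative, which is exactly what we needed.

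I do not anticipate any real obstacle here; the only mild subtlety is making sure each manipulation is an equivalence (clearing by $s_{ta}>0$ is fine, and no sign flips occur since we never divide by a potentially negative quantity). The boundary case $s_{ta}=|a|$ should be mentioned explicitly, since there the inequality in the lemma collapses to $|t|\le |t|+|a|-|a|\cdot 0/\ldots$, i.e.\ equality up to the trivial slack provided by $|a|\ge 0$, and our factored form $(|a|-s_{ta})(|t|+|a|)\ge 0$ still holds (as $0$). The full write-up is therefore just the chain of equivalences above followed by the one-line non-negativity argument.
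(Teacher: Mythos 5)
Your proof is correct and rests on the same algebraic core as the paper's: clear the denominator using $s_{ta}>0$ and exploit $s_{ta}\leq|a|$ together with non-emptiness of the spans. The only difference is presentational --- you argue directly via the factorization $(|a|-s_{ta})(|t|+|a|)\geq 0$, whereas the paper runs the same manipulation as a proof by contradiction, dropping the non-negative term $|a|^2-|a|s_{ta}$ to reach $s_{ta}>|a|$; your version is if anything a bit cleaner, and your slightly muddled aside about the boundary case $s_{ta}=|a|$ is harmless since the factored form already covers it with equality.
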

\begin{proof}

To prove it by contradiction assume that
\begin{equation}
    |t| > |a|\frac{|t|+|a|-s_{ta}}{s_{ta}} \:,
\end{equation}
then

\begin{equation}
    s_{ta}|t| > |a||t|+|a||a|-|a|s_{ta} \:,
\end{equation}
and also $0 < s_{ta} \leq |a|$, thus $|a||a|-|a|s_{ta} \geq 0$. Therefore even if we assume that\\$|a||a|-|a|s_{ta} = 0$. We get
\begin{equation}
\begin{split}
    s_{ta}|t| > |a||t| \\
    s_{ta} > |a| \: ,
\end{split}
\end{equation}

which is in contradiction with $0 < s_{ta} \leq |a|$.
\end{proof}

\begin{theorem}
\label{the:F1worthSearchingTheorem}
Let $S$ be a set of non-empty spans, $a$ an non-empty answer span, $t$ non-empty trial span, $0 < s_{ta} \leq|a|$ is number of shared tokens for $t$ and $a$, and $S_b = \{z | z \in S \land |z| \geq |a|\frac{|t|+|a|-s_{ta}}{s_{ta}}  \}$. Then the theorem states that 
\begin{equation}
    \forall x \in S_b(\operatorname{F1}(x,a) \leq \operatorname{F1}(t,a)) \: .
\end{equation}

\end{theorem}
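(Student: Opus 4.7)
The plan is to unpack the definition of F1 on token multisets. In the QA setting used here, $F1(x,a)$ is the harmonic mean of precision $s_{xa}/|x|$ and recall $s_{xa}/|a|$, so it simplifies to $F1(x,a) = \frac{2 s_{xa}}{|x|+|a|}$, where $s_{xa}$ denotes the number of shared tokens between $x$ and $a$. I would begin the proof by writing this closed form for both $F1(x,a)$ and $F1(t,a)$, since the target inequality then becomes a purely algebraic statement about $|x|$, $|a|$, $|t|$, $s_{xa}$, and $s_{ta}$.

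The key observation is the trivial upper bound $s_{xa} \leq |a|$: no span can share more tokens with $a$ than $a$ itself contains. Combined with $|x|+|a| > 0$, this gives the uniform bound
\begin{equation}
F1(x,a) \;=\; \frac{2 s_{xa}}{|x|+|a|} \;\leq\; \frac{2|a|}{|x|+|a|}.
\end{equation}
So it is enough to show $\frac{2|a|}{|x|+|a|} \leq \frac{2 s_{ta}}{|t|+|a|}$ for every $x \in S_b$. Since $s_{ta}>0$ and $|x|+|a|>0$, cross-multiplying this inequality is an equivalence, and it reduces to $|a|(|t|+|a|) - s_{ta}|a| \leq s_{ta}|x|$, which rearranges exactly to the membership condition $|x| \geq |a|\frac{|t|+|a|-s_{ta}}{s_{ta}}$ defining $S_b$.

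Chaining these two steps finishes the argument: for any $x \in S_b$,
\begin{equation}
F1(x,a) \;\leq\; \frac{2|a|}{|x|+|a|} \;\leq\; \frac{2 s_{ta}}{|t|+|a|} \;=\; F1(t,a).
\end{equation}
The only subtlety worth flagging is that the cross-multiplication step requires $s_{ta}>0$, which is given by hypothesis, and $|x|+|a|>0$, which holds because all spans are assumed non-empty. Lemma \ref{lem:upBounIsGreater} is not actually needed for the inequality itself; it simply certifies that the threshold in $S_b$ is at least $|t|$, so that $t$ need not be re-examined once the search has passed beyond length $|t|\frac{\cdot}{\cdot}$ — a fact tacitly used by Algorithm \ref{alg:softMatch} to justify early termination. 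I do not anticipate any real obstacle; the proof is essentially a one-line algebraic manipulation once the $s_{xa} \leq |a|$ slack is introduced.
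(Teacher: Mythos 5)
Your proof is correct, and its algebraic core is the same as the paper's: replace $s_{xa}$ by its maximum $|a|$, then compare $\frac{2|a|}{|x|+|a|}$ against $\frac{2s_{ta}}{|t|+|a|}$, where (since $s_{ta}>0$ and all spans are non-empty) the cross-multiplied inequality is exactly the membership condition of $S_b$. The difference is organizational: the paper argues by contradiction, assuming some $x \in S_b$ with $\operatorname{F1}(x,a) > \operatorname{F1}(t,a)$, and along the way invokes Lemma \ref{lem:upBounIsGreater} to conclude $|t| \leq |x|$ and hence $s_{ta} < s_{xa}$ before substituting $s_{xa}=|a|$; you give a direct chain of inequalities and correctly observe that the Lemma is not needed for the theorem at all -- its role is only to certify that the length threshold defining $S_b$ is at least $|t|$, which matters for the early-termination logic of Algorithm \ref{alg:softMatch} rather than for the bound itself. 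Your direct version is slightly cleaner and makes explicit that the threshold is precisely the point at which $\frac{2|a|}{|x|+|a|}$ drops to $\operatorname{F1}(t,a)$ (i.e., the cross-multiplication is an equivalence), whereas the paper's contradiction form more closely mirrors how the bound is used inside the search loop; the paper's intermediate step $s_{ta}<s_{xa}$ is logically dispensable in both treatments.
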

\begin{proof}
To prove it by contradiction assume that

\begin{equation}
    \exists x \in S_b (\operatorname{F1}(x,a) > \operatorname{F1}(t,a)) \: .
\end{equation}

F1 score can be expressed as:
\begin{equation}
    \operatorname{F1(b,c)}=\frac{2s_{bc}}{|b|+|c|} \: ,
\end{equation}

thus 
\begin{equation}
\label{eq:ineqF1}
    \frac{2s_{xa}}{|x|+|a|} >  \frac{2s_{ta}}{|t|+|a|} \: .
\end{equation}

From Lemma \ref{lem:upBounIsGreater} $|t|\leq|x|$. Therefore $s_{ta}<s_{xa}$, to satisfy the inequality (in equation \ref{eq:ineqF1}), and we know that $0 < s_{xa} \leq |a|$. So let the $s_{xa}=|a|$ (the maximum) then

\begin{equation}
\begin{split}
    \frac{2|a|}{|x|+|a|} >  \frac{2s_{ta}}{|t|+|a|}
    \\
    |a|(|t|+|a|) > s_{ta}|x| + s_{ta}|a|
    \\
    |x| < |a| \frac{|t|+|a|-s_{ta}}{s_{ta}} \: ,
\end{split}
\end{equation}
which is in contradiction with $x \in S_b$.
\end{proof}

\section{Softmax Notation}
\label{app:softmax_not}
Usually, softmax function $ \sigma: \mathbb{R}^K \rightarrow \mathbb{R}^K $ is  defined as:
\begin{equation}
   \sigma(v)_i = \frac{e^{v_i}}{\sum^K_{j=1} e^{v_j}}.
\end{equation}

However, some parts of this work used variant of softmax that is defined as follows:
\begin{equation}
\begin{split}
    \operatorname*{softmax}\limits_{x \in D} \big({f(x)}\big)_{y} = \frac{e^{f(y)}}{\sum\limits_{x \in D} e^{f(x)}} \: ,
\end{split}
\end{equation}
where $D$ is the input set, $f : D \rightarrow \mathbb{R}$, $y \in D$.

\section{Decoding the Distributions from the Extractive Reader}
\label{app:decoding_ext_probs}
We analyzed the subsets of joint probability space over spans obtained via multiplication of distributions as explained in section \ref{ss:ext_reader} in Table \ref{tab:ext_r_prob_space}. The factors of this space are the distribution given by the outer product of independent probability distributions $\boldsymbol{P}_{start}(.) \boldsymbol{P}_{end}(.)^\top$ denoted as I, joint probability distribution $\boldsymbol{P}_{joint}(.)$ denoted as J, and passage distribution $\boldsymbol{P}_{passage}(.)$ denoted as C.
\begin{table}[H]
\scalebox{0.90}{
\begin{tabular}{cccc}
\toprule
\textbf{Factorization} & \multicolumn{1}{l}{\textbf{NQ-dev}} & \multicolumn{1}{l}{\textbf{NQ-test}} & \multicolumn{1}{l}{\textbf{EfficientQA}} \\ \midrule
I                                       & 48.32                              & 50.58                               & 47.33                                   \\
J                                       & 48.53                              & \textbf{51.25}                      & \textbf{47.83}                          \\
I+J                                     & \textbf{48.57}                     & 50.83                               & \textbf{47.83}                          \\
I+C                                     & 48.22                              & 50.55                               & 47.22                                   \\
J+C                                     & 48.49                              & 51.11                               & 47.56                                   \\
I+J+C                                   & 48.50                                & 50.86                               & 47.67                                   \\\bottomrule
\end{tabular}
}

    \caption{The results of extractive reader with different types of distribution used for decoding. See text for details.}
    \label{tab:ext_r_prob_space}
\end{table}

\section{Passage Reranker Revision}
In previous version of this work we used a Longformer encoder \cite{Beltagy2020Longformer} with concatenated passages at it's input to benefit from the early fusion between passages. Therefore each passage was scored not only according to the question but also according to other passages. However, we did not observe any significant benefits when we used the Longformer setup over a RoBERTa which scores each passage independently (see Table~\ref{tab:old_ablation_study}). 

\begin{table*}
    \centering
    \scalebox{0.95}{
\begin{tabular}{ccc|rrr|rrr|rrr}
\toprule
\multirow{2}{*}{\bf Index} & \multirow{2}{*}{\bf Readers} & \multirow{2}{*}{\bf Fusion} & \multicolumn{3}{c|}{\bf NQ-Open (dev)}                              & \multicolumn{3}{c|}{\bf NQ-Open (test)}                                                 & \multicolumn{3}{c}{\bf EfficientQA}                                             \\
                           &                              &                             & Long. & \multicolumn{1}{c}{RoB.} & \multicolumn{1}{c|}{$\Delta$} & \multicolumn{1}{c}{Long.} & \multicolumn{1}{c}{RoB.} & \multicolumn{1}{c|}{$\Delta$} & \multicolumn{1}{c}{Long.} & \multicolumn{1}{c}{RoB.} & \multicolumn{1}{c}{$\Delta$} \\ \midrule
\multirow{5}{*}{1.7M}      & ext                          & -                           & 46.88 & 46.64                       & -0.24                         & 48.81                     & 48.92                       & 0.11                          & 45.22                     & 45.06                       & -0.16                        \\
                           & gen                          & -                           & 47.12 & 47.11                       & -0.01                         & 48.39                     & 48.31                       & -0.08                         & 45.56                     & 45.22                       & -0.34                        \\
                           & ext+gen                      & naive                       & 47.71 & 47.78                       & 0.07                          & 50.55                     & 50.33                       & -0.22                         & 46.94                     & 46.78                       & -0.16                        \\
                           & ext+gen                      & aggr                        & 50.17 & 49.89                       & -0.28                         & 52.11                     & 52.38                       & 0.27                          & 49.06                     & 49.44                       & 0.38                         \\
                           & ext+gen                      & aggr+bd                     & 50.50 & 50.25                       & -0.25                         & 51.99                     & 52.58                       & 0.59                          & 48.61                     & 49.22                       & 0.61                         \\ \midrule
\multirow{5}{*}{21M}       & ext                          & -                           & 48.50 & 48.38                       & -0.12                         & 50.86                     & 50.72                       & -0.14                         & 47.67                     & 47.56                       & -0.11                        \\
                           & gen                          & -                           & 49.34 & 49.40                       & 0.06                          & 51.50                     & 50.69                       & -0.81                         & 47.33                     & 47.33                       & 0.00                         \\
                           & ext+gen                      & naive                       & 49.91 & 49.99                       & 0.08                          & 53.43                     & 52.44                       & -0.99                         & 49.06                     & 49.11                       & 0.05                         \\
                           & ext+gen                      & aggr                        & 52.05 & 51.80                       & -0.25                         & 54.96                     & 54.90                       & -0.06                         & 51.56                     & 52.00                       & 0.44                         \\
                           & ext+gen                      & aggr+bd                     & 52.36 & 52.07                       & -0.29                         & 55.01                     & 54.99                       & -0.02                         & 51.06                     & 52.22                       & 1.16                         \\ \bottomrule
\end{tabular}}%
    \caption{Exact match comparison of Longformer (Long.) and RoBERTa (RoB.) based passage reranker.}
    \label{tab:old_ablation_study}
\end{table*}

\section{LDA Analysis of Pruned Passages}
We apply LDA \cite{blei2003latent} with 100 topics to random subset of 1M passages from knowledge base. Then we apply T-SNE \cite{van2008visualizing} and project LDA vectors into 2 dimensions. The results are shown in Figure \ref{fig:lda_analysis}. The pruned passages seem to be evenly distributed through the topics.

\begin{figure*}[t]
    \centering

    \scalebox{0.52}{\includegraphics[]{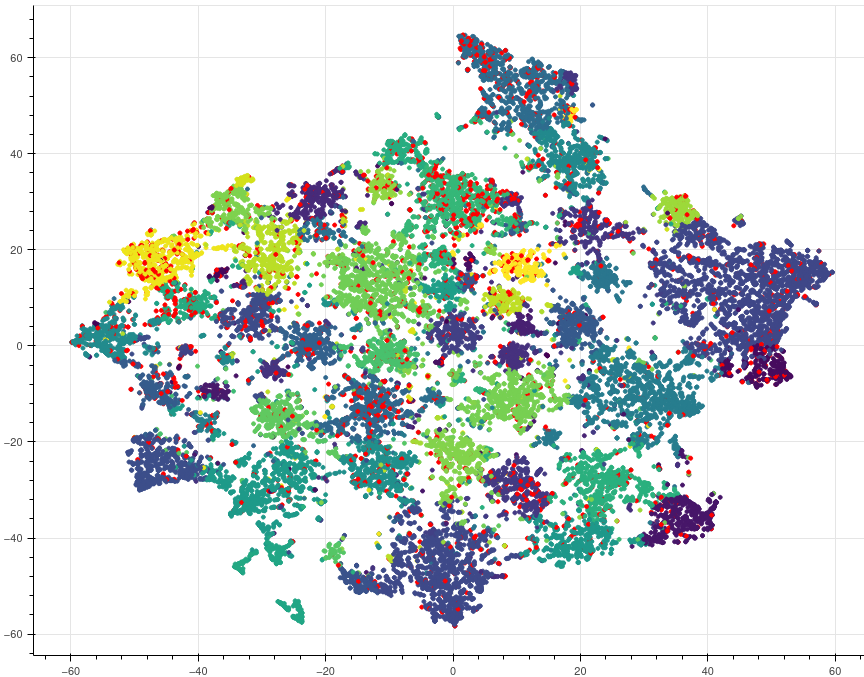}}
    \caption{T-SNE plot of Latent dirichlet allocation (LDA) trained over 1M passages of Wikipedia. Red points are pruned passages from the 1.7M set. Other colors mark passages by their most dominant topic found by LDA. We remove passages with maximum topic weight lesser than 0.2 to clusterize the plot data.}
    \label{fig:lda_analysis}
\end{figure*}

\end{document}